\documentclass{article} % For LaTeX2e
\usepackage{iclr2024_conference,times}
\usepackage{multirow}
\usepackage{soul} % use this (many fancier options)
\usepackage{amsmath}
\usepackage{amsthm}
\usepackage{wrapfig}
\usepackage{graphicx}
\usepackage{tabularx,booktabs}

% Optional math commands from https://github.com/goodfeli/dlbook_notation.
%%%%% NEW MATH DEFINITIONS %%%%%

\usepackage{amsmath,amsfonts,bm}

% Mark sections of captions for referring to divisions of figures

% Highlight a newly defined term

% Figure reference, lower-case.

% Figure reference, capital. For start of sentence

% Section reference, lower-case.

% Section reference, capital.

% Reference to two sections.

% Reference to three sections.

% Reference to an equation, lower-case.
\def\eqref#1{equation~\ref{#1}}
% Reference to an equation, upper case
\def\Eqref#1{Equation~\ref{#1}}
% A raw reference to an equation---avoid using if possible

% Reference to a chapter, lower-case.

% Reference to an equation, upper case.

% Reference to a range of chapters

% Reference to an algorithm, lower-case.

% Reference to an algorithm, upper case.

% Reference to a part, lower case

% Reference to a part, upper case

\def\1{\bm{1}}

% Random variables

% rm is already a command, just don't name any random variables m

% Random vectors

% Elements of random vectors

% Random matrices

% Elements of random matrices

% Vectors

% Elements of vectors

% Matrix
\def\mA{{\bm{A}}}
\def\mB{{\bm{B}}}

\def\mP{{\bm{P}}}
\def\mQ{{\bm{Q}}}

\def\mV{{\bm{V}}}
\def\mW{{\bm{W}}}

% Tensor
\DeclareMathAlphabet{\mathsfit}{\encodingdefault}{\sfdefault}{m}{sl}
\SetMathAlphabet{\mathsfit}{bold}{\encodingdefault}{\sfdefault}{bx}{n}

% Graph

% Sets

\def\sD{{\mathbb{D}}}
% Don't use a set called E, because this would be the same as our symbol
% for expectation.

\def\sS{{\mathbb{S}}}

% Entries of a matrix

% entries of a tensor
% Same font as tensor, without \bm wrapper

% The true underlying data generating distribution

% The empirical distribution defined by the training set

% The model distribution

% Stochastic autoencoder distributions

 % Laplace distribution

\newcommand{\R}{\mathbb{R}}

% Wolfram Mathworld says $L^2$ is for function spaces and $\ell^2$ is for vectors
% But then they seem to use $L^2$ for vectors throughout the site, and so does
% wikipedia.

 % See usage in notation.tex. Chosen to match Daphne's book.
\newcommand{\norm}[1]{\| #1\|}

\usepackage{color}
\usepackage{xcolor}
\usepackage{xspace}
\usepackage{algorithm}
\usepackage{algpseudocode}
\usepackage{wrapfig}
\usepackage{makecell}

\newcommand{\mxi}{\ensuremath{\boldsymbol{\xi}}}
\newcommand{\mtheta}{\ensuremath{\boldsymbol{\Theta}}}
\newcommand{\mphi}{\ensuremath{\boldsymbol{\Phi}}}
\newcommand{\method}{\ensuremath{\textsf{FFA-LoRA}}\xspace}

\newcommand{\lora}{{{LoRA}}\xspace}

\newcommand{\mypara}[1]{\noindent\textbf{#1 }\xspace}
\newcommand{\mysubpara}[1]{\noindent\textit{#1}\xspace}

\addtolength{\parskip}{-0.5mm}

\usepackage{hyperref}
\usepackage{url}

\newtheorem{definition}{Definition}
\newtheorem{theorem}{Theorem}

\title{Improving \lora in Privacy-preserving Federated Learning}

% Authors must not appear in the submitted version. They should be hidden
% as long as the \iclrfinalcopy macro remains commented out below.
% Non-anonymous submissions will be rejected without review.

\author{Youbang Sun\thanks{ Work was done while the first author Youbang Sun was an intern at Alibaba Group.} \\
% Department of Mechanical \& Industrial Engineering\\
Dept. of Mechanical \& Industrial Engineering\\
Northeastern University\\
% Boston, MA 02115, USA \\
\texttt{\{sun.youb\}@northeastern.edu} \\
\And
Zitao Li,  Yaliang Li \& Bolin Ding \\
Alibaba Group \\
% Bellevue, WA 98004, USA \\
\texttt{\{zitao.l, yaliang.li,\ }\\ \texttt{\  bolin.ding\}@alibaba-inc.com
}
}

% The \author macro works with any number of authors. There are two commands
% used to separate the names and addresses of multiple authors: \And and \AND.
%
% Using \And between authors leaves it to \LaTeX{} to determine where to break
% the lines. Using \AND forces a linebreak at that point. So, if \LaTeX{}
% puts 3 of 4 authors names on the first line, and the last on the second
% line, try using \AND instead of \And before the third author name.

\iclrfinalcopy % Uncomment for camera-ready version, but NOT for submission.
\begin{document}

\maketitle

\begin{abstract}
Low-rank adaptation (\lora) is one of the most popular task-specific parameter-efficient fine-tuning (PEFT) methods on pre-trained language models for its good performance and computational efficiency.
\lora injects a product of two trainable rank decomposition matrices over the top of each frozen pre-trained model module.
However, when applied in the setting of privacy-preserving federated learning (FL), \lora may become unstable due to the following facts: 1) the effects of data heterogeneity and multi-step local updates are non-negligible, 2) additive noise enforced on updating gradients to guarantee differential privacy (DP) can be amplified and 3) the final performance is susceptible to hyper-parameters.
A key factor leading to these phenomena is the discordance between jointly optimizing the two low-rank matrices by local clients and separately aggregating them by the central server.
Thus, this paper proposes an efficient and effective version of \lora, \textbf{F}ederated \textbf{F}reeze \textbf{A LoRA} (\method), to alleviate these challenges {and further halve the communication cost of federated fine-tuning LLMs}.
The core idea of \method is to fix the randomly initialized non-zero matrices and only fine-tune the zero-initialized matrices.
Compared to \lora, \method is motivated by practical and theoretical benefits in privacy-preserved FL. 
Our experiments demonstrate that \method provides more consistent performance with better computational efficiency over vanilla \lora in various FL tasks.

\end{abstract}

\section{Introduction}
\vspace{-0.2cm}
Recent years have witnessed tremendous success in the development of large language models (LLMs)~\citep{touvron2023llama, openai2023gpt, zhang2022opt, zeng2022glm}. 
The applications of LLMs range from a versatile chatbot for different writing tasks~\citep{chatgpt} to multi-modal systems~\citep{driess2023palm, wu2023visual, bommasani2021opportunities}.
Besides the commercialized products based on general-purpose LLMs, people can also build their customized LLMs by utilizing their task-specific data to fine-tune pre-trained LLMs ~\citep{howard2018universal}.
Since modern LLMs usually contain billions of parameters, fine-tuning on all parameters has prohibitively high computational costs. As a remedy, parameter efficient fine-tuning (PEFT) approaches~\citep{ding2023parameter}, such as Low-Rank Adaptation (LoRA)~\citep{hu2021lora} have been developed and commonly adapted in many downstream tasks.
% Apart from direct test generation from the models, one of the most important application of LLMs is to transfer the knowledge from one pre-trained, large-scale model to some specific down-stream tasks. 
% Fine-tuned pre-trained LLMs are able to achieve state-of-the-art performance on various types of tasks.
% Due to the large model size and immense communication and computation burden of LLMs, parameter efficient fine tuning (PEFT) has been commonly used. PEFT methods are motivated by the low intrinsic dimensions in over-parametered models [][]. 
PEFT methods freeze the majority of parameters in pre-trained LLMs, and perform update on a small subset of parameters. 
Compared to full model fine-tuning, these approaches usually offer on-par or even better performance while significantly improving computational efficiency. 
% Additionally, the modularity of PEFT algorithms is also helpful when adapting one LLM to multiple downstream tasks: when switching between tasks, only the PEFT trainable modules need to be swapped while the base LLM is reusable.

In this paper, we focus on \lora for its good performance and versatility for a wide spectrum of tasks with many variations.
However, \lora still requires sufficient training data to achieve significant improvement over the raw model.
The data-limited parties can unite with others and adopt federated learning (FL)~\citep{li2020federated} as the computation framework to fine-tune the model collaboratively. 
The parameter-efficient nature of \lora is welcomed in FL due to its low communication costs and relatively low local computational burdens.
Furthermore, if the data parties in FL (usually known as \emph{clients} in FL) want to provably prevent local data leaking from their shared information in FL, differential privacy (DP)~\citep{dwork2006calibrating} techniques can be further employed to provide privacy guarantees.

While there are many existing research results exploring (privacy-preserved) PEFT in the central setting, the exploration on how to conduct (privacy-preserved) \lora in the FL setting is still a pre-mature.
Directly migrating \lora methods from the central setting and combining it with FedAvg may not achieve the best performance since other sources of interference in the (privacy-preserving) FL setting, such as noisy gradients and non-iid distribution of data {in the cross-silo setting}, can play important roles in the optimization process. 
In real-world LLM applications with privacy concerns, such as federated fine-tuning \citep{babakniya2023slora} or fine-tuning under differential privacy guarantees \citep{li2022does}, the performance of LoRA often suffers deterioration.

\mypara{Contributions.}
In this paper, we identify three discordances in applying \lora in the privacy-preserved FL setting. 
The first is presented as a mismatched term brought by the joint local updates and separate global aggregations on the two sets of low-rank matrices of \lora.
The second discordance is that if we employ DP-SGD as the differentially private optimizer for training, the injected noise can be amplified by the locally ``semi-quadratic'' nature of \lora.
Lastly, the choice of one hyper-parameter of \lora, the scaling factor $\alpha$, can significantly affect the convergence and performance of the final model, no matter enforcing DP or not.

To resolve these discordances, we propose our solution named \textbf{F}ederated \textbf{F}reeze \textbf{A LoRA} (\method).
\method freezes the non-zero initialized low-rank matrices and only perform update and aggregation on the zero-initialized matrices, only half as many parameters as \lora.
Beside \method's obvious effect of saving half of the communication and computational cost in FL, we also provide intuitions on why it can alleviate the three aforementioned discordances.
We conduct comprehensive experiments to demonstrate the advantages of \method over \lora in privacy-preserving FL, across different tasks, hyper-parameters and privacy protection levels.

We summarize our contributions as follows:

\mypara{$\bullet$} We explore the conditions in privacy-preserved FL that are discordant with \lora, and provide explanations on the potential reasons of this performance degradation.

\mypara{$\bullet$} We propose a new method, \method, which tailors \lora to increase its performance in these undesirable but unavoidable conditions in privacy-preserved FL.

\mypara{$\bullet$}     We conduct extensive experiments to verify that \method can consistently outperform \lora.

% \begin{enumerate}
%     % \item We provide insights on why LoRA becomes unreliable when faced with given challenges, and how our approach is theoretically motivated to be more robust. 
%     \item 
%     We explore the conditions in privacy-preserved FL that are discordant with \lora, and provide explanations on the potential reasons of this performance degradation.
    
%     \item 
%     We propose a new method, \method, which tailors \lora to increase its performance in these undesirable but unavoidable conditions in privacy-preserved FL.

%     \item 
%     We conduct extensive experiments to verify \method can consistently outperform \lora.

% \end{enumerate}
\section{Background and Related Works}\label{sec_related}
\vspace{-0.2cm}

\mypara{Parameter efficient fine-tuning.}
The ever-increasing network size of LLMs makes them prohibitively expensive, if possible at all, to fine-tune directly. 
To mitigate this problem, parameter-efficient fine-tuning (PEFT) methods have been proposed.
{These methods introduce a small number of additional trainable parameters $\mtheta$ to improve model performance and keep most of the pre-trained parameters $\mphi$ frozen.
The task-specific increment $\Delta \mphi$ is then encoded into $\Delta \mtheta$ with much smaller dimensions.}
\cite{houlsby2019parameter} added additional trainable neural modules named \textit{adapters} to each layer of the network. 
Alternatively, prefix-tuning\citep{li2021prefix} and prompt-tuning\citep{lester2021power} modify the network by concatenating additional trainable dimensions to input or hidden layers of the network. 
Another series of works \citep{hu2021lora, yu2021large} proposed \lora and RGP, using low-rank matrices to approximate or re-parameterize the pre-trained weight matrices. 
\lora is arguably the most popular approach among PEFT methods, it only requires tuning less than 1\% of the parameters in the full fine-tune approach but achieves comparable performance in a wide range of downstream tasks. 
There are also works \citep{he2021towards, chavan2023one} that seek to provide a generalized method that unifies these PEFT methods.

\mypara{Federated fine-tuning with LLM.}
Although fine-tuned LLMs can become backbones for applications in different areas, the fine-tuning process still favors large-scale, domain-specific data.
However, such domain-specific data is typically possessed by multiple parties, with each party's dataset only containing inadequate data to fine-tune models by itself. Furthermore, these parties are often prohibited from sharing such data directly with other entities. 
A common solution for this dilemma is federated learning~\citep{kairouz2021advances}, which allows a set of agents to fine-tune LLMs efficiently by sharing their local model updates without explicitly sharing their respective data.
\cite{tian2022fedbert} proposed FedBERT and performed federated pre-training on the BERT model.
Different from traditional machine learning models, LLM's tremendous model size can consume significant amount of resources for cross-party communication and require immense computation resources for local training.
Many research solutions rely on the combination of PEFT with FL.
There have been multiple studies of PEFT in FL in the recent years,  \cite{zhang2022federated} considers PEFT in the federated setting. 
Recently, \citep{kuang2023federatedscope} proposed FS-LLM, a federated framework for federated fine-tuning LLMs. 
It has been pointed out that data heterogeneity in FL is a challenge for PEFT algorithm \citep{kairouz2021advances, babakniya2023slora}.

% For most applications, it is generally preferable to fine-tune LMs on a large scale dataset. 
% Although in most real-world scenarios, these tasks are often carried out in a distributed manner due to a wide spectrum of practical concerns, such the rising privacy concerns from people, as well as the sheer demand of computational resources and size of the datasets.

% Due to the exceptional performance of PEFT on LLMs,
% it becomes possible to leverage the federated 
% Fine-Tuned LLMs have become a backbone for applications in both academia and industry alike. 

% Federated Learning [] was proposed as a solution to this dilemma. Motivated by the reduction in communication and privacy, FL allows a set of agents to fine-tune LLMs efficiently without sharing their local data directly. There have been multiple studies of PEFT in FL in the recent years, \citep{tian2022fedbert} proposed FedBERT and performed federated full fine-tuning on the BERT model. \citep{zhang2022federated} considers PEFT in the federated setting. Recently, \citep{kuang2023federatedscope} proposed FS-LLM, a federated framework for federated fine-tuning LLMs. It has been pointed out that data heterogeneity in FL is a challenge for PEFT algorithm \citep{kairouz2021advances, babakniya2023slora}.

\mypara{PEFT with differential privacy.}
Although LLMs are powerful tools and offer great performance thanks to their ability to extract rich features with the transformer structure and large number of parameters, it is also well known that LLMs with large number of parameters can leak critical information contained in the training dataset~\citep{carlini2021extracting, huang2022large}. 
A popular privacy notion that can provide theoretical guarantees against training data leakage from the model is differential privacy (DP) \citep{dwork2006calibrating}.

\begin{definition}[$(\epsilon, \delta)$-DP ]
    A randomized algorithm $\mathcal{A}$ is $(\epsilon, \delta)$-differentially private if for any two neighboring datasets $\sD$ and $\sD'$, which differ in exactly a single record, and for all possible subsets $\sS\subset \mathcal{O}$ of possible outputs of $\mathcal{A}$: 
    $Pr[\mathcal{A}(\sD)\in \sS]\leq e^\epsilon Pr[\mathcal{A}(\sD')\in \sS] + \delta.$
\end{definition}
Intuitively, DP ensures that any single record cannot significantly affect the distribution of the output.
With such indistinguishable output distributions, any adversary can only gain limited additional knowledge about whether a specific record is in the input data.
The level of privacy is denoted by the privacy parameters $(\epsilon, \delta)$, a smaller choice of $(\epsilon, \delta)$ means a stronger privacy protection guarantee.

\mysubpara{Machine learning with DP: DP-SGD.}
A classic mechanism used to ensure the published model differentially private is DP-SGD~\citep{song2013stochastic, abadi2016deep, bassily2014private}.
It requires a DP optimizer to privatize gradients before using them to update the model.
Compared with the vanilla stochastic gradient descent (SGD) algorithm, DP-SGD has two additional operations in each iteration.
It first clips per-sample gradients with a norm constraint $C$ to limit the maximum influence of any sample.
Then, it adds a Gaussian noise $z\sim \mathcal{N}(0, C^2 \sigma^2 I_p)$ to the sum of clipped gradients in a batch $\mathcal{B}$.
Namely, $\bar{g} = \left(\sum_{i\in \mathcal{B}} \text{Clip}(\nabla f_i, C) + z \right) / |\mathcal{B}|$.
Finally, this noisy sum of clipped gradients $\bar{g}$ is used to update the model.
The scalar $\sigma$ is decided by privacy composition rules~\citep{abadi2016deep} given privacy parameter $\epsilon, \delta$, total number of iteration $T$ and sampling rate $q=|\mathcal{B}| / N$, where $N$ is the total number of samples in the training set.

In the central setting, where a single trainer possesses all data, existing studies on fine-tuning LLM with DP guarantees mainly adopt DP-SGD as the optimization algorithm.
\cite{yu2021differentially} studied the effect of parameter-efficient algorithms in private fine-tuning. \cite{li2021large, li2022does} found that although the number of trainable parameters has been significantly reduced for PEFT, the performance of private fine-tuning is not significantly better, which might be contrary to traditional beliefs~\citep{bassily2014private}.

\mypara{Different DP settings in FL.}
Generally, there are two different levels of differential privacy protection in federated learning, depending on whether the federated aggregation server is trusted by the clients or not.
The first setting assumes that the server is trusted, the model updates are shared to the server without privacy concerns; this privacy guarantee is on the final output model achieved by randomization in the global aggregated update on the server side~\citep{mcmahan2017learning}.
A stronger privacy setting is to forgo the trustworthy server assumption and ensure the shared update from each client is already differentially private~\citep{li2021federated, wu2020value, qu2021natural}.

In this paper, we adopt the \emph{stronger privacy setting}, ensuring that any shared information (i.e., updates of model parameters) from local clients to server satisfies DP. 
By DP's properties, including parallel composition, sequential composition and resistance to post-processing~\citep{dwork2006calibrating, abadi2016deep,li2021federated}, the final model automatically satisfies DP globally.

% There are two setting typically considered DP settings, central DP (CDP) and local DP (LDP)\citep{qu2021natural}, we refer to [] and the references therein for a detailed description. 
% In this paper, we consider the client level CDP setting with the help of DP-SGD \citep{abadi2016deep} updates.

% \cite{qu2021natural} considered locally private training of language models. 
% The global setting has been studied by \citep{basu2021privacy} in federated learning.
% \zitao{Are we considering client level DP?}\ybsun{yes, updated statements in draft.}

% Privacy is especially important for language-related tasks, since much additional information is contained in the dataset. \citep{qu2021natural} considered locally private training of LMs. The global setting has been studied by \citep{basu2021privacy} in federated learning. \citep{yu2021differentially} studied the effect of parameter efficient algorithms in private fine-tuning. \citep{li2021large, li2022does} found that although the number of trainable parameters have been greatly reduced for PEFT, the performance of private fine-tuning is not significantly better, contrary to intuition.

% \zitao{missing DPSGD}
\section{\lora in privacy-preserving FL}\label{sec_when_lora}
\vspace{-0.2cm}

% Most of the PEFT methods reduce the number of parameters by introducing a pair of matrices $\mW_{down} \in \R^{r\times d}$ and $\mW_{up} \in \R^{d \times r}$, and these matrices are then fine-tuned instead of the full matrix $\mW_0 \in \R^{d \times d}$. In this sense, the number of trainable parameters is reduced by an order of $O(r/d)$ compared to full fine-tune.
In this paper, we focus on \lora, one of the most promising PEFT methods in the central setting, \lora has been shown to exhibit better performance than other PEFT methods in the federated setting~\cite{kuang2023federatedscope}.
The core idea of \lora is to constrain the weight update on the model by a low rank decomposition, 
\begin{equation}\label{lora_update}
    \mW_0 + \Delta \mW = \mW_0 + \mB\mA.
\end{equation}
Instead of training the entire weight matrix $\mW_0 \in \R^{d\times k}$ composing $\mphi$, the updates are performed on $\mA \in \R^{r\times k}$ and $ \mB \in \R^{d \times r}$ composing $\mtheta$. With $r << \min (d, k)$, the number of trainable parameters $|\mtheta|$ is reduced by an order of $O(r/\min (d, k))$ compared to full fine-tune with size $|\mphi|$.

In order to recover the performance of raw model at the start of training, and keep the weights trainable through back-propagation, $\mA$ uses random Gaussian initialization, while $\mB$ is set to zero. 
The product matrix is additionally scaled by a factor $\alpha/r$. 
% Later studies \citep{kuang2023federatedscope} have shown that the scaling factor
$\alpha$ also has influence on the performance of \lora and is required to be tuned.

% \mypara{\lora is discordant with FedAvg because of model average and data heterogeneity.}
% \mypara{Discordance 1: \lora introduces interference on global model with FedAvg.}
\mypara{Discordance 1: Data heterogeneity and model-averaging introduce interference to \lora.}
The performance of vanilla \lora is negatively affected when faced with {cross-silo} FL tasks with data heterogeneity \citep{babakniya2023slora}. 
Notice that the loss for back-propagation is computed on the composition of raw model parameters and the product of $\mA$ and $\mB$ (as~\Eqref{lora_update}), and \lora performs optimization over $\mA$ and $\mB$ jointly on client side.
This implies that the problem is approximately optimized as a locally semi-quadratic problem (suppose the model is locally linear when learning rate is small).
However, when the server performs aggregation on the server side, $\mA$ and $\mB$ are averaged separately following vanilla FedAvg \cite{mcmahan2017communication}.
The product of the averaged $\mA$ and $\mB$ involves additional terms that may neither benefit the optimization on the clients' loss or FL global loss.
% However, \lora performs optimization over both $\mA$ and $\mB$, with the fine-tuned model depending on the product of them. This implies that the problem is locally semi-quadratic (suppose the model is locally linear when learning rate is small). 
% In the aggregation step of a communication round, the server aggregates the updated weights from all clients, for a system with 2 clients, as the following

For example, consider a FL task involving two clients with datasets of a same size.
If clients locally fine-tune on full parameters and the server aggregates with FedAvg, the new model parameters can be represented as the following:
\begin{equation}
\begin{aligned}
    &\mW^+ =  \frac{1}{2} (\mW_1 + \mW_2)= \mW_0 + \frac{1}{2} (\Delta \mW_1  + \Delta \mW_2), \text{ where } \mW_i = \mW_{0} +  \Delta \mW_i, i = 1,2. 
\end{aligned}
\end{equation}
An implicit assumption ensuring the global convergence of FL algorithms is $\Delta \mW_{global} \approx \frac{1}{2} (\Delta \mW_1  + \Delta \mW_2)$, where $\Delta \mW_{global}$ is the update assuming the server can access all clients' dataset directly.
When the clients use \lora locally, we can also consider $\Delta \mW_i \approx \mB_i \mA_i$.
However, after using FedAvg to aggregate the trainable low-rank matrices, the server produces
% According to Equation \ref{lora_update}, $\Delta \mW$ is further decomposed to $\Delta \mW_i = \mB_i \mA_i$. The central server aggregation will produce
\begin{equation}\label{lora_aggregation}
    \underbrace{\Tilde{\mW}^+ =  \mW_0 + \frac{1}{2} ( \mB_1  + \mB_2) \times \frac{1}{2} ( \mA_1  +  \mA_2) }_{\text{Parameters after aggregation with \lora + FedAvg}}\neq \underbrace{\mW_0 +\frac{1}{2}( \mB_1 \mA_1 +  \mB_2 \mA_2) = \mW^+}_{\text{Ideal parameters following model-averaging}}.
\end{equation}
Thus, it is possible for two clients in FL to converge to two different combinations of adaptation matrices $\mB_i, \mA_i$, yet when an aggregation such as FedAvg is applied in server, a linear combination does not necessarily provide good performance for the specific task.
The difference between $\frac{1}{2} ( \mB_1  + \mB_2) \times \frac{1}{2} ( \mA_1  +  \mA_2)$ and $\frac{1}{2} (\Delta \mW_1  + \Delta \mW_2)$ may become more significant when i) number of local update steps between aggregations is large and ii) the local datasets are different across clients.

This echoes with the ``client-drift'' phenomenon discussed by \cite{karimireddy2020scaffold}. 
``Client-drift'' happens in heterogeneous FL when there is a difference between the average of local loss optima of clients and the optimum of the global loss, i.e. $\sum_i \mtheta^*_i \neq \mtheta^*_{global}.$ 
% \zitao{define or replace $x$?}
It is caused by the local gradient dissimilarity among clients and slows down convergence. 
Since the parameters in \lora are locally quadratic in construction, they are more prone to ``client-drift'' than a locally linear task such as full-fine-tuning.

% \mypara{\lora amplifies the noise variance when employed with DP.}
\mypara{Discordance 2: The noise with DP-SGD can be amplified with \lora.}
Although \lora and DP-SGD are the most popular methods in PEFT and privacy-preserved machine learning respectively, directly combining them together may not be the optimal choice.
The discordance again comes from the semi-quadratic structure of \lora.
Consider the parameters after a single DP-SGD update.
Even if no norm clipping operation is triggered, the parameters are updated as 
$$\mW_0 + (\mB + \mxi_B)(\mA + \mxi_A) = \mW_0 + \mB\mA + \mxi_B\mA + \mB\mxi_A + \mxi_B\mxi_A,$$ 
where $\mxi_A$ and $\mxi_B$ consist of the Gaussian noises from DP-SGD.
Three terms contain noise and the third term, $\mxi_B\mxi_A$, no longer follows a Gaussian distribution.
This shows that noise is cascaded after the multiplication in \lora, introducing additional difficulties for convergence in fine-tuning. 

We provide synthetic verification with Figure \ref{fig_noise}. 
In this example, $\mW\in\R^{1024\times 1024}$ and rank $r = 8$. 
We plot the Frobenius norm of the noise matrices $\mxi_B\mxi_A$ and $\mxi_W$ for \lora and full fine-tuning respectively. 
Due to the multiplication in LoRA by construction, the norm of noise scales quadraticly with $\sigma$, and is significantly worse that full fine-tune when $\sigma$ exceeds 0.5. 

For an FL algorithm with 1000 communication rounds, 10 local update steps and a dataset such as SST-2, using batch-size $\mB = 200$, a DP guarantee with $\epsilon = 6, \delta = 1e-5$ will require a noise factor of $\sigma = 0.99$.
In this case, \lora will produce approximately 3 times more noise compared to full model fine-tuning
This could be an explanation to why \lora does not significantly outperform full fine-tuning despite having less parameters, as reported in \citep{yu2021differentially, li2022does, babakniya2023slora}.

\begin{wrapfigure}[]{r}{0.41\textwidth}
  \begin{center}
    \includegraphics[width=0.4\textwidth]{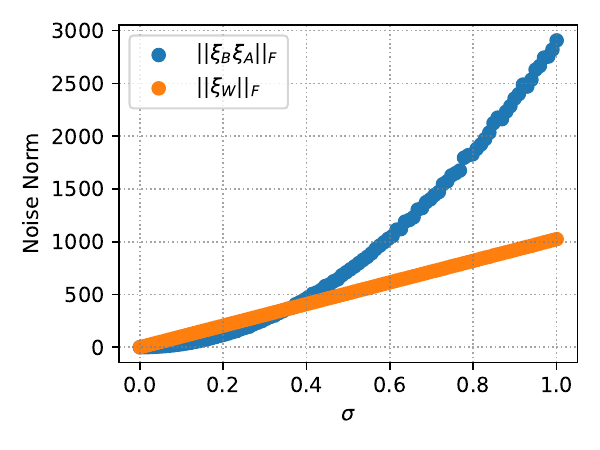}
  \end{center}
  \caption{\label{fig_noise} Frobenius norm of noise terms within a single update.}
\end{wrapfigure}

% \zitao{verify?}

% When \lora is applied to a task trained with differential privacy protection, the challenge becomes increasingly complex. 
% We consider the DPSGD algorithm provided by \citep{abadi2016deep}, where after each backward propagation, the gradient information is further augmented with the addition of per-example gradient clipping and Gaussian noise. 
% Since Gaussian noise is applied to both $\mA$ and $\mB$, the total noise applied to $\mW = \mW_0 +\mB\mA$ is cascaded after multiplication, making the problem even more difficult. 

% \mypara{The choice of $\alpha$ directly affects convergence.}
\mypara{Discordance 3: \lora requires careful tuning on $\alpha$.}
In terms of the optimal scaling factor $\alpha$, empirical results \citep{kuang2023federatedscope} have demonstrated that in many more complex tasks, a larger $\alpha$ shows higher performance after fine-tuning, yet as $\alpha$ increases, the algorithm becomes more and more unstable with much higher variance across different runs.  
% \ybsun{exps showing alpha}
According to \cite{zhou2017convergence}, the convergence speed for FedAvg-like algorithms is closely related to the objective function's smoothness factor $L$. As the scaling factor $\alpha$ increases, the problem becomes less smooth by construction, slowing down convergence. 
% \zitao{is it possible to formally write down this intuition? additional disadvantage: clipping in DP? alpha best for local update may be different from best alpha for global reference?}

Furthermore, with increase of the scaling factor $\alpha$, the impact of noise on the model performance gets worse. 
This could be explained by the fact that as $\alpha$ increases, the update $\Delta \mA$ becomes less significant compared to $\mA_0$.
Since $\mB$ is initialized at 0, the gradient information in $\Delta \mB$ becomes more important in comparison.
Yet the gradient clipping and privacy engine sees the update on both $\mA$ and $\mB$ equally. 
Due to the imbalanced distribution of information in gradients, the algorithm suffers from either excessive information loss or excessive noise, a trade-off between increasing $\alpha$ for better performance and decreasing $\alpha$ to prevent noise-induced performance deterioration.
% \zitao{Maybe adding more details, especially mention when $\mB$ is initialized to 0.}

% \zitao{mention hyperparameter tuning is expensive in FL?}
While searching for a good hyper-parameter $\alpha$ is important, hyper-parameter optimization (HPO) is usually costly~\citep{khodak2021federated}.
Adding in $\alpha$ for HPO means extra communication and computation costs proportional to the search space size of the $\alpha$.

% \ybsun{maybe some toy example figures here afterwards for the three cases}
\section{A Simple Receipt: \method}\label{sec_flora}
\vspace{-0.2cm}

% In this work, we discovered that although \lora exhibits consistently good performance in centralized fine-tuning, for more challenging tasks such FL or privacy-preserving updates, \lora can sometimes fail to converge with desired speeds. 
In the previous section, we discussed the discordance between \lora and privacy-preserved FL.
Motivated by theory, we propose a simple modification to \lora, Federated Freeze-A Low Rank Adapters, or \method for short. 
\method modifies the training process of \lora by setting matrix $\mA$ to fixed after initialization. 
That is, for a weight matrix $\mW\in \R^{d\times k}$, we consider the model update to be projected to a low-rank matrix such that 
$$\mW = \mW_0 + \Delta \mW = \mW_0 + \mB \mA_0, \text{ with } \mB\in \R^{d\times r}, \mA_0 \in \R^{r \times k}.$$
$\mW_0$ is initialized as the pre-trained weight, and $\mA_0$ follows a random Gaussian initialization.
Following vanilla \lora, we start with $\mB_0 = 0$ so that the pre-trained model is recovered at the start of fine-tuning.
The key difference is that we consider $\mB$ trainable and keep both $\mW_0$ and $\mA_0$ frozen.  

We note that our approach is somewhat similar to the works regarding the intrinsic dimension of deep models by \citep{li2018measuring, aghajanyan2020intrinsic}, however these works emphasis on the existence of low intrinsic dimensions in deep models and the generalization properties.
We summarize the advantages of \method as the following.

\mypara{\method has no extra interference with data heterogeneity and model-averaging.}
We reconsider the federated aggregation example in Section \ref{sec_when_lora}.
{In the heterogeneous setting, each client will generate a different $\Delta \mW_i$.
Since $\Delta \mW_i \approx \mB_i \mA_0$ in \method, the update is more compatible with FedAvg and DP-SGD than \lora.
Similar to Equation \ref{lora_aggregation}, we write the the aggregation step of a two-client system for \method:}
\begin{equation}\label{ffalora_aggregation}
    \Tilde{\mW}^+ =  \mW_0 + \frac{1}{2} ( \mB_1  +  \mB_2) \times  \mA_0 = \mW_0 +\frac{1}{2}(  \mB_1 \mA_0 +  \mB_2 \mA_0) = \mW^+.
\end{equation}

Unlike \lora in Equation \ref{lora_aggregation}, \method does not have the aggregation error term caused by low rank adaptation.

\mypara{\method works better with noise from DP-SGD.}
Because \method no longer employs the locally semi-quadratic structure as \lora, the noise in DP would not be amplified.
When no norm clipping operation is triggered, the parameters are updated as 
$\mW_0 + (\mB + \mxi_B)\mA_0 = \mW_0 + \mB\mA + \mxi_B\mA$. 
This is because the trainable parameters are only the zero-initialized matrices $\mB$. 
The noise introduced by DP-SGD is only in the term $\mxi_B\mA$ but without the $\mxi_B\mxi_B$ term,
making \method less susceptible to noise than \lora.

In addition, from an analytical perspective, if the model is Lipschitz smooth with respect to $\mW$, similar smoothness can be obtained for \method, but not for \lora. 
We state our formal theorem and proof on the smoothness conditions of the two algorithm in the appendix.
Convergence properties similar to \citep{zhou2017convergence} can be derived from this theorem.

\mypara{\method does not rely on $\alpha$, and is equivalent to \lora with $\alpha = \infty$.}
In our previous discussion of \lora's reliance on $\alpha$ in some tasks, this reliance on $\alpha$ is circumvented in \method. 
{We can view the set of trainable parameters $\mtheta$ as a dynamical system, a time-dependent series $\{\mtheta_t\}_{t \in [T]}$ generated by the the \method algorithm.}
We present the following theorem to illustrate the connection between $\alpha$ and $\eta$ in \method.

\begin{theorem}\label{thm_alpha}
    For local updates with the same initial condition on $\mathbf{W}$, vanilla \lora update with scaling factor $\alpha_{\lora}$ produces trajectory $\{W_{\alpha_{\lora}}^k\}_{k\in [K]}$, and \method with scaling $\alpha_{FFA}$ produces trajectory $\{W_{\alpha_{FFA}}^k\}_{k\in [K]}$. Then we have
    \begin{equation}
        \lim_{\alpha_{\lora} \rightarrow \infty} W_{\alpha_{\lora}}^k = W_{\alpha_{FFA}}^k, \text{ for all } k, \alpha_{FFA}.
    \end{equation}
    % $$\lim_{\alpha_{\lora} \rightarrow \infty} W_{\alpha_{\lora}}^k = W_{\alpha_{FFA}}^k, \text{ for all } k, \alpha_{FFA}.$$
\end{theorem}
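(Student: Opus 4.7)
The plan is to show that as $\alpha_{\lora}\to\infty$, the matrix $\mA$ in vanilla \lora is effectively frozen at its initialization $\mA_0$ over any finite horizon $K$, so the \lora update on $\mW$ collapses to the \method update on $\mW$. Both $\alpha_{\lora}$ and $\alpha_{FFA}$ enter their respective updates only as overall multiplicative factors, so matching any given $\alpha_{FFA}$ amounts to calibrating the \lora learning rate so that the per-step effective update on $\mW$ agrees at initialization.

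First, I would write the update rules explicitly. Letting $\beta=\alpha_{\lora}/r$, $\beta_{FFA}=\alpha_{FFA}/r$, and $\mG_t=\nabla_{\mW} L(\mW_t)$, vanilla \lora performs
\[
\mA_{t+1}=\mA_t-\eta\beta\mB_t^\top\mG_t,\qquad \mB_{t+1}=\mB_t-\eta\beta\mG_t\mA_t^\top,
\]
while \method (with $\mA\equiv\mA_0$ frozen) reduces to the single recursion
\[
\mW_{t+1}=\mW_t-\eta_{FFA}\beta_{FFA}^2\,\mG_t\mA_0^\top\mA_0.
\]
I would then calibrate $\eta\beta^2=\eta_{FFA}\beta_{FFA}^2\equiv c$; since $\mB_0=0$, the first \lora step on $\mA$ vanishes and $\mW_1$ coincides for both algorithms.

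Second, I would induct on $k$ under the hypothesis $\|\mA_k-\mA_0\|=O(1/\beta^2)$ together with $\|\mW^k_{\alpha_{\lora}}-\mW^k_{\alpha_{FFA}}\|=O(1/\beta^2)$. The crucial estimate is $\|\mB_t\|=O(c/\beta)$, obtained by summing $t$ updates each of size $\eta\beta=c/\beta$. This bound forces the per-step drift $\|\mA_{t+1}-\mA_t\|=O(\eta\beta\|\mB_t\|)=O(c^2/\beta^2)$, and it makes the spurious cross term $\eta\beta^2\mB_t\mB_t^\top\mG_t$ that distinguishes the \lora update on $\mW$ from the \method update scale as $O(c^3/\beta^2)$. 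A discrete Gr\"onwall-type argument, invoking Lipschitz continuity of $\nabla_{\mW} L$, then propagates these $O(1/\beta^2)$ single-step errors through the recursion to a cumulative bound $\|\mW^k_{\alpha_{\lora}}-\mW^k_{\alpha_{FFA}}\|\le C(K)/\beta^2$, which vanishes as $\alpha_{\lora}\to\infty$ for every fixed $k\le K$ and every $\alpha_{FFA}$.

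The principal obstacle is controlling error propagation through $\mG_t=\nabla_{\mW} L(\mW_t)$: small perturbations in $\mA_t$ and $\mB_t$ feed into $\mW_{t+1}$, which shifts $\mG_{t+1}$, so the single-step estimates could in principle compound across $K$ iterations. A smoothness assumption on $L$ (in the spirit of the reference to \cite{zhou2017convergence} already invoked above) supplies the Lipschitz control needed to bound this compounding by a horizon-dependent constant, which still tends to zero like $1/\beta^2$ and closes the argument.
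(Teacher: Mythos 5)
Your proposal is correct and follows essentially the same route as the paper's proof: fix the effective step size $\eta\beta^2$ (i.e., scale the \lora learning rate like $1/\alpha_{\lora}^2$), observe that $\mathbf{A}$ is effectively frozen so the extra terms in the per-step update of $\mathbf{W}$ (the $\mathbf{B}\mathbf{B}^\top$ cross term and the gradient-product term) vanish as $\alpha_{\lora}\rightarrow\infty$, and push the limit through a finite induction on $k$. The only real difference is rigor rather than route: you make the error propagation explicit via $\|\mathbf{B}_t\|=O(1/\beta)$, $\|\mathbf{A}_t-\mathbf{A}_0\|=O(1/\beta^2)$ and a discrete Gr\"onwall bound under Lipschitz smoothness of $\nabla_{\mathbf{W}}L$ (an assumption the paper leaves implicit), whereas the paper compares the $\alpha$-scaled trajectory to the $\alpha=1$ \lora trajectory via reparameterization and takes the limit term by term inside the induction.
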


We refer to the appendix for proof of Thm. \ref{thm_alpha}.
It is evident that introducing the scaling factor when fine-tuning with \method is unnecessary.
However, the same does not apply to \lora. 
For the case of \lora, both $\mA$ and $\mB$ are trainable by construction, however, $\mA_0$ is initialized with Gaussian distribution, away from $0$. For \lora, when $\alpha$ is different, the initialization point is different, if we want two selection of $\alpha$ to have the same performance, we need to also change the variance of A's initialization.

% \mypara{\method is equivalent to \lora with $\alpha = \infty$.} \zitao{merge?} 
For vanilla \lora, as discussed in Section \ref{sec_when_lora}, as $\alpha$ increases and $\eta$ decreases, the update on $\mA$ become less significant compared to $\mA_0$.
% , the update on $\mB$ is till significant since $\mB_0 = 0.$ 
As $\alpha \rightarrow \infty$, there is almost zero change to be made on $\mA$, i.e. $\mA \sim \mA_0$. 
Yet the update on B is just as significant, making the dynamics of \lora infinitely close to \method when $\alpha$ approaches infinity.

\mypara{\method saves computation and communication.}
Since $\mA_0$ is fixed after initialization in \method, the total number of trainable parameters in the model is effectively halved compared to \lora. 
This leads to the most straightforward advantage in the efficiency of computation and communication.
Meanwhile, since we get same performance for a wide range of $\alpha$ in \method as long as the learning rate is scaled accordingly, we can fix $\alpha$ and only search for other hyper-parameters such as learning rate in HPO.

We note that subsequent to the submission of this paper, multiple new studies \citep{zhang2023lora,zhu2024asymmetry,hao2024flora} have also considered similar approaches. While this paper distinctly considers the federated and privacy related properties, the succeeding papers can serve as verification of the effectiveness of \method.
Another intuitive approach towards the problem at hand is to alternatively update the two LoRA weights. 
While this update method exhibit similar properties, it is empirically shown to be slow to converge.

In general, not only does \method provide higher efficiency compared to \lora, \method is also able to preserve all the benefits of \lora, while avoiding the shortcomings of \lora as mentioned previously in Section \ref{sec_when_lora}.

\section{Experiments}\label{sec_exp}
\vspace{-0.2cm}

In this section, we evaluate and compare the performance of \method with \lora on two LMs, RoBERTa \citep{liu2019roberta} and LLaMA \citep{touvron2023llama}. 
We show that our approach consistently perform better for different types of tasks. 
We first evaluate the language understanding tasks from the GLUE benchmark\citep{wang2018glue} including MNLI, SST2, QNLI and QQP using the RoBERTa model. 
For language generation tasks, we use the LLaMA model with experiment settings provided by \citep{kuang2023federatedscope} as benchmark and use the GSM-8K dataset for evaluation. 
All experiments were run using NVIDIA Tesla A100 GPUs with half-precision enabled for efficiency.

Our experiments are organized as follows:
We provide the overall performance comparison of \method and \lora in Section \ref{sec_perf} (Table \ref{roberta_DP}, \ref{roberta_no_DP}).
Questions regarding the critical factors of convergence are answered in Section \ref{sec_ablation} (Table \ref{exp_iid_noniid}, \ref{exp_rank_DP}, \ref{exp_alpha}).
The evaluation on language generation tasks are provided in Section \ref{sec_llama}. 
% We further discuss a potential extension of \method in Section \ref{sec_QVP} of appendix.\ybsun{remove QVP here?}

We note that our results do not exactly match the centralized PEFT results presented in \citep{hu2021lora} and \citep{yu2021differentially} due to the additional introduction of federated communication/aggregation and data heterogeneity in our setup. Our experiments with \lora is able to match \lora's performance reported in \citep{hu2021lora} in the centralized setting.

% \subsection{Performance of \method and \lora in Federated Language Understanding}
\subsection{{Performance of \method and \lora in Language Understanding Tasks}}\label{sec_perf}
Our experiments on language understanding tasks are based on RoBERTa-Large (355M) ~\citep{liu2019roberta}, a popular choice that has been widely adopted in many research studies for its robustness and versatility. 
% Although RoBERTa has been out-performed by larger-sized and more recent LMs, it still offers competitive performance among similar-sized language models.
We start from a pre-trained model available from the HuggingFace library.
% and evaluate the results over the PEFT process.

All our experiments with \lora and \method are run in a 3-client cross-silo federated setting. 
Data on clients are randomly split among all clients sampled to fit certain proportions to ensure strong data heterogeneity.
For the heterogeneous setting, we split data based on their labels, we use $[0.1, 0.9], [0.9, 0.1], [0.5, 0.5]$ data split for binary classification tasks and $[0.9, 0.05, 0.05], [0.05, 0.9, 0.05], [0.05, 0.05, 0.9]$ for three-class classification tasks.
In order to make a fair comparison, we keep the batch-size $B = 200$ and total communication round to $1000$, the local update steps to $10$, the same across all experiments. All experiments use the same SGD (DP-SGD for the experiments with privacy guarantees) optimizer, all the transformer-related hyper-parameters such as sequence length $l_{seq} = 128$, are kept to be consistent with previous studies \citep{hu2021lora}. 
The classification head of the LM is frozen after initialization, and we add adapters to both the attention layers and the feed-forward layers and choose a scaling factor $\alpha = 8$ for \lora. The same scaling factor $\alpha$ is applied to \method for the sake of consistency, although it is not needed as stated in Section \ref{sec_flora}. 

% \zitao{add LoRA on which modules?}

\mypara{Experiments with differential privacy guarantees.}
We report the best result from a set of experiments run with learning rate $\eta \in \{0.01, 0.02, 0.05, 0.1\}$ for \lora and $\eta \in \{0.1, 0.2, 0.5, 1\}$ for \method.
The batch-size and total number of update steps are kept to be the same across different tasks. We fix the rank $r=8$ for both algorithms.
In terms of privacy parameters, we use $\delta = 1e-5$ and three different choices of privacy budget $\epsilon\in \{6, 3, 1\}.$
Given the sampling rate, total step number and privacy requirement $\epsilon, \delta$, we use the privacy accountant from Opacus \citep{yousefpour2021opacus} to calculate the noise scale $\sigma$ for all our experiments. 
The optimal clipping threshold is determined from a grid search of $C \in \{2, 5, 10\}$.
The results are presented in Table \ref{roberta_DP}. 
To ensure the privacy guarantees have been met in our experiments, we refer to Section \ref{app:privacy} in the appendix for technical analysis.

The introduction of DP significantly degrades algorithm performance across every task for both \method and \lora, yet \method offers better performance with and without privacy. 
We note that the biggest performance gap occurs in the MNLI task, which is a three-class classification task with the strongest level of data heterogeneity across agents. This performance gap demonstrates that \method is more suitable for tasks where heterogeneity is strong.

\begin{table}[!h]
\begin{center}
\begin{tabular}{ |c|c|c |c | c| c|c|}
   \hline
Priv. Budget&Method & \makecell{MNLI\\ (matched)}&\makecell{MNLI\\ (mismatched)}& SST-2  & QQP  & QNLI  \\ 
  \hline \multirow{2}{*}{Non Private}& 
  \lora  & $82.03_{\pm10.7 } $ & $82.50_{\pm 10.9} $  & $94.32_{\pm 2.1}$ & $83.51_{\pm  3.3} $ & $88.95_{\pm 6.7}  $\\
  \cline{2-7}  & 
  \method  & $85.05_{\pm 1.1} $ & $85.62_{\pm 1.0}  $ &  $94.32_{\pm 1.7}$ & $84.35_{\pm 0.6}$  &  $90.35_{\pm1.9}$ \\  
  \hline \multirow{2}{*}{$\epsilon = 6$}&
  \lora   & $39.46_{\pm 14.3} $ & $39.69_{\pm 14.8} $ &  $93.70_{\pm 0.5}$ & $82.11_{\pm 1.0 } $ &  $84.99_{\pm1.1}$\\ 
  \cline{2-7} & 
  \method  & $78.81_{\pm 0.8}  $ & $80.00_{\pm 0.7}  $&   $93.73_{\pm 0.3}$ & $83.31_{\pm 0.4}$ &  $87.27_{\pm1.0}$ \\  
  \hline \multirow{2}{*}{$\epsilon = 3$}&
  \lora   & $35.82_{\pm 8.9} $ &$35.85_{\pm 9.1} $ &   $93.32_{\pm 0.5}$ & $82.08_{\pm  0.7} $&  $ 83.94_{\pm0.6} $\\ 
  \cline{2-7} & 
  \method  &  $77.42_{\pm 0.8} $ & $78.69_{\pm 0.8}  $ &  $93.59_{\pm 0.3}$ & $83.03_{\pm 0.4}$ & $86.18_{\pm1.7} $ \\  
  \hline \multirow{2}{*}{$\epsilon = 1$}&
  \lora   & $33.80_{\pm 1.6} $ & $33.80_{\pm 1.5} $ &   $92.14_{\pm 0.6}$ & $81.28_{\pm 0.7} $&  $78.93_{\pm6.8}$ \\ 
  \cline{2-7} & 
  \method  & $75.05_{\pm 1.3}$ & $76.50_{\pm 1.3}$ &  $92.46_{\pm 0.5}$ & $82.50_{\pm 0.4}$ & $81.53_{\pm1.4}$ \\  
  \hline
\end{tabular}
\caption{\label{roberta_DP} Experiments of \method and \lora with differential privacy guarantees, accuracy ($\%$) evaluated across 20 runs with mean and standard deviation.}
\end{center}
\end{table}

\subsection{Ablation Study}\label{sec_ablation}
Although \method is shown to be effective under federated settings and with private guarantees, previous works have also provided studies on the impact of the other hyper-parameters in \lora algorithm. In order to provide a more comprehensive evaluation of the three discordances discussed in Section \ref{sec_when_lora},
we still need to answer the following questions:
\begin{itemize}
    \item How does data heterogeneity affect performance of \method and \lora?
    \item What is the impact of adapter parameter budget ($r$) for \method and the relationship between adapter parameter budget ($r$) and privacy budget ($\epsilon$) of DP-SGD?
    \item How do \method and \lora behave when we choose different $\alpha$ for scaling?
    \item How does different initialization on $\mA$ affect performance?
\end{itemize}

We answer the questions above with the following experiments.

\mypara{How does data heterogeneity affect performance of \method and \lora?}
Our discussion in Section \ref{sec_when_lora} stated that \lora is not compatible with FedAvg when there is strong heterogeneity among clients.
For verification, we consider the four tasks with both homogeneous and heterogeneous data, and provide the experiment results below.
The severe heterogeneity case corresponds to the data distribution provided in Section \ref{sec_perf}, while data is split with $[0.15, 0.85], [0.85, 0.15], [0.5, 0.5]$ and $[0.6, 0.2, 0.2], [0.2, 0.6, 0.2], [0.2, 0.2, 0.6]$ respectively in the mild heterogeneity configuration.  

\begin{table}[!h]
\begin{center}
\begin{tabular}{ |c |c |c| c|c| c|c|}
   \hline
Data Dist. & Method& \makecell{MNLI\\ (matched)}& \makecell{MNLI\\ (mismatched)}& SST2 & QQP & QNLI \\ 
  \hline \multirow{2}{*}{i.i.d.}&
  \lora   &  86.90&87.15  &    94.42 & 84.47&    91.38 \\ 
  \cline{2-7} & 
  \method  & 87.13&87.21  &    95.14 & 86.31 &   92.64 \\  
  \hline \multirow{2}{*}{mild het.}&
  \lora   &  87.01&87.33  &    93.55 &84.41&    91.36 \\ 
  \cline{2-7} & 
  \method  &  87.04&87.36 &    94.10 & 85.33 &   91.62 \\  
  \hline \multirow{2}{*}{severe het.}&
  \lora   &  82.03&82.50  &    94.32 &83.51&    88.95 \\ 
  \cline{2-7} & 
  \method  &  85.05&85.62 &    94.32 & 84.35 &   90.35 \\  
  \hline
\end{tabular}
\caption{\label{exp_iid_noniid} Prediction accuracy ($\%$) comparison between i.i.d. and non-i.i.d. data distribution.}
\end{center}
\end{table}

% \begin{table}[!h]
% \begin{center}
% \begin{tabular}{ |c |c | c|c| c|}
%    \hline
% Method&MNLI test set& i.i.d. & mild het. & severe het. \\ 
%   \hline \multirow{2}{*}{\lora}&
%   \lora   &  86.90\%   &    87.01\% & 82.03\%  \\ 
%   \cline{2-5} & 
%   \method  & 87.15\%     &    87.33\% & 82.50\%   \\  
%   \hline \multirow{2}{*}{\method}&
%   \lora   &  87.13\%     &    87.04\% &85.05\%     \\ 
%   \cline{2-5} & 
%   \method  &  87.21\%    &    87.36\% & 85.62\%  \\  
%   % \hline \multirow{2}{*}{}&
%   % \lora   &  82.03\%/82.50\%  &    94.32\% &83.51\%     \\ 
%   % \cline{2-5} & 
%   % \method  &  85.05\%/85.62\% &    94.32\% & 84.35\%    \\  
%   \hline
% \end{tabular}
% \caption{\label{exp_iid_noniid} \change{Comparison across different heterogeneity in data distribution.}}
% \end{center}
% \end{table}

It is evident that \method behaves better than \lora in both i.i.d. and non-i.i.d. settings, but the performance is similar in the privacy-free setting. 
% \mypara{Scaling Factor $\alpha$}

% \mypara{Impact of Differential Privacy with Different Parameter Budget (Rank)}

\mypara{What is the {impact of adapter parameter budget ($r$) for \method and the} relationship between adapter parameter budget ($r$) and privacy budget ($\epsilon$) of DP-SGD?}

We first evaluate the performance of \method and \lora without the consideration of privacy, we use the \textit{mild heterogeneity} data distribution and keep the batch-size and total number of update steps to be the same across different tasks.
We experiment with rank $r \in \{2, 4, 8, 16\}$ on four tasks, and report the best accuracy.

The results are shown in Table \ref{roberta_no_DP}. 
From the \emph{subspace similarity} discussions in \lora, we note that increasing rank does not necessarily increase information from the gradients, similar observations can be found in our experiments. 
Based on the results, we can see that \method has better performance in the majority of tasks, regardless of the trainable parameter number. 
In fact, due to the reduction of trainable parameters in \method, we should compare between \method and \lora with the same parameter budget (i.e. compare \method $r = 16$ with \lora $r = 8$).
In this case, the advantage of \method over \lora becomes more apparent.

\begin{table}[!h]
\begin{center}
\begin{tabular}{ |c |c | c |c|c| c|c|}
   \hline
Method & \makecell{\# of params\\ (million)} & \makecell{MNLI\\ (matched)}& \makecell{MNLI\\ (mismatched)} & SST-2  & QQP  & QNLI  \\ 
  % \hline \lora (rank 16) & 3.15 (0.877\%) &87.11\%/87.26\%&  94.76\% &85.93\%&  \%\\
  \hline \lora (rank 16) & 3.15 (0.877\%) &87.43&87.47&   93.98 & 84.79&    91.92\\
  % \hline \lora (rank 8) & 1.57 (0.440\%) &86.90\%/87.15\%&  94.42\% &84.47\%&  91.38\% \\  
  \hline \lora (rank 8) & 1.57 (0.440\%) & 87.01&87.33&   93.55 & 84.41&    91.36 \\  
  % \hline \lora (rank 4) & 0.79 (0.220\%) &86.76\%/86.88\% & 94.41\% &84.25\%&  \% \\ 
  \hline \lora (rank 4) & 0.79 (0.220\%) & 86.07&86.41 &  93.89 & 83.71&    91.51 \\ 
  % \hline \lora (rank 2) & 0.39 (0.110\%) &86.18\%/86.80\% & 94.38\% &83.49\% & \% \\  
  \hline \lora (rank 2) & 0.39 (0.110\%) & 85.83&86.52&  93.58 & 83.00 &   91.76 \\  
  % \hline \method (rank 16) & 1.57 (0.440\%) & 86.74\%/86.65\% &95.41\% &85.83\%&  \% \\  
  \hline \method (rank 16) & 1.57 (0.440\%) & 85.82&86.38 &95.30 &84.89&   91.65 \\  
  % \hline \method  (rank 8) & 0.79 (0.220\%) &87.13\%/87.21\%&  95.14\%  & 86.31\%&  92.64\% \\  
  \hline \method  (rank 8) & 0.79 (0.220\%) &87.04&87.36&  94.10  & 85.33&   91.62 \\  
  % \hline \method  (rank 4) & 0.39 (0.110\%) & 86.88\%/87.05\%&  94.76\% &85.52\%&   \% \\  
  \hline \method  (rank 4) & 0.39 (0.110\%) & 85.61&86.11&  94.47 &84.64&   91.38 \\  
  % \hline \method  (rank 2) & 0.20 (0.055\%) &85.68\%/86.09\%&  95.11\%  &85.78\% &  \% \\  
  \hline \method  (rank 2) & 0.20 (0.055\%) &84.89&85.75&  94.18  &84.92 &  90.98 \\  
  \hline
\end{tabular}
\caption{\label{roberta_no_DP} Prediction accuracy ($\%$) comparison on \method and \lora with different ranks. }
\end{center}
\end{table}

Although there have been multiple studies on the performance of \lora with DP, the relationship between rank $r$ and privacy budget $\epsilon$ is unclear.
We present the experiments below and compare the impact of rank $r$ on \method versus \lora on the \textit{QNLI} dataset. We use a privacy budget of $\epsilon\in \{6, 3, 1\}$ with rank $r \in \{2, 4, 8, 16\}$. The results are shown in Table \ref{exp_rank_DP}.

In our experiments, we find that as the privacy requirements gets stronger, the performance difference of \lora between different rank $r$ becomes more and more apparent, yet for \method, the algorithm is still able to output relatively stable performance on a wide range of rank selections.

% \ybsun{maybe bigger search if time allow, current results only with 1-3 search space}
\begin{table}[!h]
\begin{center}
\begin{tabular}{ |c |c | c|c| c|c|}
   \hline
privacy budget &Method  &$r = 16$& $r = 8$ & $r = 4$ & $r = 2$ \\ 
  \hline \multirow{2}{*}{Non-Private}&
  \lora   &  91.92 &  91.36  &91.51&   91.76 \\ 
  \cline{2-6} & 
  \method  & 91.65 &   91.62 &91.38 &  88.56 \\  
  \hline \multirow{2}{*}{$\epsilon = 6$}&
  \lora   &  86.87 &  86.45 &85.24&   83.54 \\ 
  \cline{2-6} & 
  \method  & 87.33 &   87.57 &86.74 &  86.31 \\  
  \hline \multirow{2}{*}{$\epsilon = 3$}&
  \lora   &  86.23 & 86.05 & 85.35&   85.57 \\ 
  \cline{2-6} & 
  \method  & 86.36 & 86.98 & 86.22&  85.08 \\  
  \hline \multirow{2}{*}{$\epsilon = 1$}&
  \lora   &  80.54 &   81.45 &58.30&   58.15 \\ 
  \cline{2-6} & 
  \method  & 81.87 &   83.01 &82.06 &  82.64 \\  
  \hline
\end{tabular}
\caption{\label{exp_rank_DP} Prediction accuracy ($\%$) of \method and \lora across privacy and parameter budgets.}
\end{center}
\end{table}

\mypara{How do \method and \lora behave when we choose different $\alpha$ for scaling?}
As mentioned previously, \lora requires a good  scaling factor $\alpha$ in order to achieve a good performance.
It has been shown in proof of Thm. \ref{thm_alpha} that the scaling factor does not affect the overall performance of the algorithm.
We conducted experiments with a selection of different $\alpha$, and refer to \ref{sec_alpha_exp} in the appendix for the details and discussion.

\mypara{How does different initialization on $\mA$ affect performance?}
Since our proposed \method sets $\mA$ as fixed throughout the fine-tuning process, a natural question would be regarding the initialization of $\mA$.We provide a discussion in Appendix \ref{app:init}.

\subsection{Extending Beyond Language Classification}\label{sec_llama}

We next consider the task of Natural Language Generation (NLG) with LLaMA-7B, a more sophisticated model with significantly more parameters. 

Our method has achieved an accuracy of $17.12\%$ on the task of GSM-8K, significantly better than the best performance of \lora at $15.68\%$ ($15.31\%$ reported in \citep{kuang2023federatedscope}).
It is also the best results on fine-tuning LLaMA with GSM-8K to the best of our knowledge.

For an additional dataset on the computer vision task. We use the pre-trained vision transformer \citep{dosovitskiy2020image} and consider the task of fine-tuning on the Food-101 dataset \cite{bossard2014food}. In short, the algorithms performs similarly compared to the language classification tasks.

We report the details of the two experiments above in Appendix \ref{app:gsm8k_samples} and \ref{app:vision} respectively.

\section{Conclusion}
\vspace{-0.2cm}

In this paper, we discussed how to improve \lora in the context of privacy-preserving federated learning.
An in-depth analysis was provided on \lora's deficient performance in FL and with DP guarantees. 
We proposed a modification to \lora named \method, which is theoretically motivated, empirically verified and computationally more efficient.
Beyond the scope of this paper, \method could motivate more interesting problems related to PEFT for future study.
For instance, we provide some preliminary results in Appendix~\ref{sec_QVP} to motivate future studies on algorithms that are even more parameter-efficient for federated LLM fine-tuning, one potential future direction is alternative initialization methods for matrices such as the orthogonal initialization.
From a theoretical perspective, \method could be related to random kernel methods due to its pseudo-linear nature.

\bibliography{iclr2024_conference}
\bibliographystyle{iclr2024_conference}
\newpage
\appendix
\section{Appendix}
% You may include other additional sections here.

\subsection{Smoothness Analysis}

\begin{theorem}[Smoothness conditions]\label{thm_smooth}
    Assume that the loss function give weight and dataset is denoted $F(\mathbf{W}, D)$. For a low-rank decomposition on model parameter $\mathbf{W}$ such that $\mathbf{W}(\mathbf{A}, \mathbf{B}) = \mathbf{W}_0 + \mathbf{B} \mathbf{A}$ satisfying Equation (1). We have the following properties.
    \begin{enumerate}
        \item If $\mathbf{B} $ is trainable, $\mathbf{A}$ is fixed with $\|\mathbf{A}\| \leq C$ and $F(\mathbf{W}, D)$ is Lipschitz smooth with factor $L$. The loss function $F(\mathbf{W}(\mathbf{A}, \mathbf{B}))$ is Lipschitz smooth with respect to $\mathbf{B}$ with factor $LC^2$.
        \item If both $\mathbf{A}$ and $\mathbf{B}$ are trainable and $F(\mathbf{W}, D)$ is Lipschitz smooth with factor $L$, the loss function $F(\mathbf{W}(\mathbf{A}, \mathbf{B}))$ has no Lipschitz smoothness guarantees.
    \end{enumerate}
    All smoothness notions are defined with respect to matrix Frobenius norm, denoted as $\|\cdot\|$.
\end{theorem}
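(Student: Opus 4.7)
The plan is to prove part 1 by direct calculation via the chain rule, exploiting the fact that with $\mathbf{A}$ fixed the reparameterization $\mathbf{B} \mapsto \mathbf{W}_0 + \mathbf{B}\mathbf{A}$ is affine. First I would write the gradient with respect to $\mathbf{B}$ as $\nabla_{\mathbf{B}} F(\mathbf{W}(\mathbf{A},\mathbf{B})) = \nabla_{\mathbf{W}} F(\mathbf{W}) \cdot \mathbf{A}^\top$, where the $\nabla_{\mathbf{W}} F$ is evaluated at the current $\mathbf{W}(\mathbf{A},\mathbf{B})$. Then for any $\mathbf{B}_1, \mathbf{B}_2$ with corresponding $\mathbf{W}_1, \mathbf{W}_2$, I bound the difference of the composed gradients by pulling out $\mathbf{A}^\top$ via submultiplicativity of the Frobenius norm, applying the Lipschitz smoothness hypothesis $\|\nabla_{\mathbf{W}} F(\mathbf{W}_1) - \nabla_{\mathbf{W}} F(\mathbf{W}_2)\| \leq L \|\mathbf{W}_1 - \mathbf{W}_2\|$, and finally using the identity $\mathbf{W}_1 - \mathbf{W}_2 = (\mathbf{B}_1 - \mathbf{B}_2)\mathbf{A}$ together with $\|\mathbf{A}\| \leq C$. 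Chaining the three bounds yields the factor $C \cdot L \cdot C = LC^2$, which is the stated smoothness constant.

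For part 2, my plan is to exhibit an explicit counterexample proving that no global Lipschitz smoothness constant can exist. I would take the simplest possible setup: scalars, with $F(w) = \tfrac{1}{2}w^2$, which is trivially Lipschitz smooth with $L=1$. Writing $w = ba$ and viewing $F(ba) = \tfrac{1}{2}(ba)^2$ as a function of $(a,b) \in \mathbb{R}^2$, I compute the Hessian
\[
\nabla^2_{(a,b)} F(ba) = \begin{pmatrix} b^2 & 2ab \\ 2ab & a^2 \end{pmatrix},
\]
whose spectral norm diverges as $|a|, |b| \to \infty$. Since a finite Lipschitz smoothness constant would bound the operator norm of the Hessian globally, no such constant exists. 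The same example extends to matrices $\mathbf{A}, \mathbf{B}$ by embedding the scalar case into a single entry.

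The hardest part is conceptual rather than technical: for part 2 I want to make clear that the failure is intrinsic to the bilinear reparameterization, not an artifact of a pathological $F$. I would therefore emphasize that even the simplest strongly convex and globally $L$-smooth $F$ ceases to be smooth once composed with the product map, because the product map itself has unbounded second-order behavior. A brief remark connecting this to Discordance~1 and 2 in Section~\ref{sec_when_lora} (amplified noise, sensitivity to $\alpha$) would tie the result back to the paper's motivation. The calculations for part 1 are routine; the care lies in stating the norm conventions (Frobenius versus operator) consistently so that submultiplicativity is applied validly.
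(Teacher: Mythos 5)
Your proposal is correct and follows essentially the same route as the paper: part~1 is the identical chain-rule computation $\nabla_{\mathbf{B}}F = \nabla_{\mathbf{W}}F\,\mathbf{A}^{\top}$ followed by submultiplicativity and the bound $\|\mathbf{A}\|\le C$ applied twice, and part~2 uses the same quadratic-loss counterexample, which the paper realizes with the sequence $\mathbf{A}_k=\mathbf{B}_k=k\mathbf{I}_d$ and a diverging gradient-difference quotient rather than your scalar Hessian $\begin{pmatrix} b^2 & 2ab \\ 2ab & a^2 \end{pmatrix}$ — a purely presentational difference, since both exhibit the unbounded second-order behavior of the bilinear map under the same $F(\mathbf{W})=\tfrac{1}{2}\|\mathbf{W}\|^2$.
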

\begin{proof}
    First we show that, given $\mathbf{W}(\mathbf{A}, \mathbf{B}) = \mathbf{W}_0 + \mathbf{B} \mathbf{A}$,  and the gradient on $\mathbf{W}$ is denoted as $\nabla_W F$, then we can write the gradients on matrix $\mathbf{B}$ as $\nabla_B F = \nabla_W F \mathbf{A}^T$, since 
    \begin{align*}
        \langle \mathbf{B}_1 - \mathbf{B}_2, \nabla_B F\rangle & = \langle \mathbf{W}(\mathbf{A}, \mathbf{B}_1) - \mathbf{W}(\mathbf{A}, \mathbf{B}_2), \nabla_W F\rangle\\
        &= \langle  \mathbf{B}_1 \mathbf{A} - \mathbf{B}_2 \mathbf{A}, \nabla_W F\rangle\\
        &= \langle  \mathbf{B}_1 - \mathbf{B}_2 , \nabla_W F \mathbf{A}^T\rangle
    \end{align*}
    % $\langle \mathbf{B}_1 - \mathbf{B}_2, \nabla_B F\rangle = Tr((\mathbf{B}_1 - \mathbf{B}_2)\nabla_B F^T)$
    Similarly, we have $\nabla_A F = \mathbf{B}^T \nabla_W F$. Using the gradients on $\mathbf{A}$ and $\mathbf{B}$, we provide the proof for all the properties.
    \begin{enumerate}
        \item For property 1, we know that for any given $\mathbf{B}_1, \mathbf{B}_2$, 
    \begin{align*}
        & \norm{\nabla_{B} F(\mathbf{W}(\mathbf{A}, \mathbf{B}_1)) - \nabla_{B} F(\mathbf{W}(\mathbf{A}, \mathbf{B}_2))} \\
        =& \norm{\nabla_{W} F(\mathbf{W}(\mathbf{A}, \mathbf{B}_1)) \mathbf{A}^T - \nabla_{W} F(\mathbf{W}(\mathbf{A}, \mathbf{B}_2)) \mathbf{A}^T} \\
        \leq& L \norm{\mathbf{W}(\mathbf{A}, \mathbf{B}_1)  - \mathbf{W}(\mathbf{A}, \mathbf{B}_2) } \norm{\mathbf{A}}\\
        \leq& L \norm{\mathbf{B}_1  -  \mathbf{B}_2 } \norm{\mathbf{A}}^2\\
        \leq& L C^2\norm{\mathbf{B}_1  -  \mathbf{B}_2 } 
    \end{align*}
    \item For the second property, for the ease of notation, we introduce the stacked variable $\mathbf{x} := [\mathbf{A}, \mathbf{B}]$. We construct a counter-example such that the function is not Lipschitz smooth with respect to $\mathbf{x}$.

    We consider $\mathbf{W}, \mathbf{A}, \mathbf{B} \in \mathbb{R}^{d\times d}, F(\mathbf{W}) = \frac{1}{2}\norm{\mathbf{W}}^2$ with $\mathbf{W}_0 = 0$. Then we consider a sequence $\{\mathbf{x}_k\}_{k \in \mathbb{N}}$ such that $\mathbf{x}_k = [\mathbf{A}_k, \mathbf{B}_k] = [k \mathbf{I}_d, k \mathbf{I}_d]$, then
    \begin{align*}
        & \lim_{k\rightarrow \infty} \frac{\norm{\nabla_x \mathbf{W}(\mathbf{A}_k, \mathbf{B}_k) - \nabla_x \mathbf{W}(\mathbf{A}_0, \mathbf{B}_0)}}{\norm{\mathbf{x_k} - \mathbf{x_0}}} \\
        =& \lim_{k\rightarrow \infty} \frac{\norm{\nabla_A \mathbf{W}(\mathbf{A}_k, \mathbf{B}_k) - \nabla_A \mathbf{W}(\mathbf{A}_0, \mathbf{B}_0)} + \norm{\nabla_B \mathbf{W}(\mathbf{A}_k, \mathbf{B}_k) - \nabla_B \mathbf{W}(\mathbf{A}_0, \mathbf{B}_0)}}{\norm{\mathbf{A}_k - \mathbf{A}_0} + \norm{\mathbf{B}_k - \mathbf{B}_0}}\\
        =& \lim_{k\rightarrow \infty} \frac{\norm{k^3 \mathbf{I}_d} + \norm{k^3 \mathbf{I}_d}}{\norm{k \mathbf{I}_d} + \norm{k \mathbf{I}_d}}\\
        =& \infty
    \end{align*}
    From the existence of the above counter-example, we can see that although $F(\mathbf{W})$ is 1-Lipschitz smooth, the function is not smooth with respect to $\mathbf{x}$.
    \end{enumerate}
\end{proof}

\subsection{Proof for Theorem \ref{thm_alpha}}

% \textbf{Proof.}
% \begin{proof}
    
% Let us first consider $\mtheta_0 = 0$, and $\mtheta$ is updated using gradient descent. When the problem is in continuous time (gradient flow) and $\mtheta_0 = 0$, then $\alpha$ is in fact trivial, it doesn't matter to the problem since the PDE is homogeneous.

% However in discrete time, the dynamics of the problem is dependent on the learning rate $\eta$, then at any given iteration, suppose for two dynamical systems that satisfy $\alpha \mtheta = \alpha' \mtheta'$ in the current iteration, we proof that for the next iteration, there exist $\eta'$ such that $\alpha \mtheta^+ = \alpha' \mtheta'^+$. 
% Since we have $\mtheta_0 = \mtheta'_0$ at initialization, the theorem is proved by induction.

% Suppose for the new scaling factor $\alpha' = k\alpha$, where $k$ is a constant, then, 
% $$\mtheta' = \mtheta/k, \frac{d}{d\mtheta'}f(\alpha'\mtheta') = k \frac{d}{d\mtheta}f(\alpha \mtheta),$$
% since 
% \begin{align*}
%     \mtheta'^+ &= \mtheta' - \eta' \frac{d}{d\mtheta'}f(\alpha'\mtheta'),\\
%     \mtheta^+ &= \mtheta - \eta \frac{d}{d\mtheta}f(\alpha \mtheta)
% \end{align*}
% % $$ x^+ = x - \eta \frac{d}{dx}f(\alpha x),$$
% Then we can find $\eta' = \eta/k^2$ such that for the next iteration,  $\alpha \mtheta^+ = \alpha' \mtheta'^+$. 

% For this optimization problem, the introduction of $\alpha$ is trivial and can be replaced with appropriate choices of learning rate $\eta$.

% % \ybsun{should we write something in mathematical statements? such as assumption, theorem, corollary, etc?}
% \end{proof}

\begin{proof}
    The theorem starts with initial condition on $\mathbf{W}$, since $\mathbf{W} = \mathbf{W}_0 + \alpha \mathbf{B}_\alpha \mathbf{A}_\alpha$ and that the initialization of $\mathbf{A}$ is non-zero, this condition implies that $\mathbf{A}_\alpha = \mathbf{A}_1 = \mathbf{A}$, and $\mathbf{B}_\alpha = \frac{1}{\alpha}\mathbf{B}_1$. Now we compare the update of the two algorithms given the same initial conditions. 

    From Theorem 1 we know that for FFA-LoRA, different $\alpha_{FFA}$ does not affect its dynamics, without loss of generality, we consider the case where $\alpha_{FFA} = 1$. 

    The FFA-LoRA update is as follows, the only update is on $B$:
    \begin{align*}
        W^{k+1}_{FFA} = W_0 + 1 \times B^{k+1}A^k = W_0 + (B^k - \eta \nabla B^k)A^k = W^k  - \eta \nabla B^k A^k
    \end{align*}

    The rest of the proof is given by induction, as long as the limit holds for the $k+1$-th local iteration given that the $k$-th iteration holds.
    % We note that here $\mathbf{B} = 0$ is not required, the equivalence is trivial for $\mathbf{B} = 0$ since no update is made on $\mathbf{A}$.

    Without the loss of generality, we first consider when $\alpha_{LoRA} = 1$, then for iteration $k$, we denote the learning rate as $\eta_1$, denote the matrices and their gradient as $\mathbf{A}^k_{1}, \mathbf{B}^k_1$ and $\nabla \mathbf{A}^k_{1}, \nabla \mathbf{B}^k_{1}$, respectively. And by definition, we have the update that
    \begin{equation*}
    \begin{aligned}
        \mathbf{A}^{k+1}_1 &\leftarrow \mathbf{A}^k_{1}-\eta_1 \nabla \mathbf{A}^k_{1}\\
        \mathbf{B}^{k+1}_{1} &\leftarrow \mathbf{B}^k_{1}-\eta_1 \nabla \mathbf{B}^k_{1}
    \end{aligned}
    \end{equation*}
    And the update of the original weight matrix $W$ becomes
    \begin{align*}
        W^{k+1}_{1} = W_0 + \Delta W^{k+1}_{1} &=  W_0 + (\mathbf{B}^k_{1}-\eta_1 \nabla \mathbf{B}^k_{1})(\mathbf{A}^k_{1}-\eta_1 \nabla \mathbf{A}^k_{1})\\
        &= W^k_1 - \eta_1 \left(\nabla \mathbf{B}^k_{1}\mathbf{A}^k_{1} +  \mathbf{B}^k_{1}\nabla\mathbf{A}^k_{1} \right) + \eta^2 \nabla\mathbf{B}^k_{1}\nabla\mathbf{A}^k_{1}
    \end{align*}
    Since LoRA do not satisfy the conditions provided in Theorem 1, changing $\alpha_{LoRA}$ will affect its updates. When we choose a different $\alpha_{LoRA} = \alpha$ and corresponding $\eta_{\alpha} = \frac{\eta_1}{\alpha^2}$, we can write the update of LoRA as
    \begin{align*}
        \mathbf{A}^{k+1}_\alpha &\leftarrow \mathbf{A}^k_{\alpha}-\eta_\alpha \nabla \mathbf{A}^k_{\alpha} 
        = \mathbf{A}^k_\alpha-\frac{\eta_1}{\alpha}\nabla \mathbf{A}^k_\alpha
        =\mathbf{A}^k_1-\frac{\eta_1}{\alpha}\nabla \mathbf{A}^k_1\\
        \mathbf{B}^{k+1}_\alpha &\leftarrow \mathbf{B}^k_\alpha-\eta_\alpha \nabla \mathbf{B}^k_\alpha
        = \mathbf{B}^k_\alpha-\frac{\eta_1}{\alpha}\nabla \mathbf{B}^k_\alpha
        = \frac{1}{\alpha}\mathbf{B}^k_1-\frac{\eta_1}{\alpha}\nabla \mathbf{B}^k_1\\
        W^{k+1}_\alpha &=  W_0 + \alpha 
        (\frac{1}{\alpha}\mathbf{B}^k_1-\frac{\eta_1}{\alpha}\nabla \mathbf{B}^k_1)
        (\mathbf{A}^k_1-\frac{\eta_1}{\alpha}\nabla \mathbf{A}^k_1)\\
        &=  W_0 + \mathbf{B}^k_1\mathbf{A}^k_1 - \eta_1 \nabla \mathbf{B}^k_1 \mathbf{A}^k_1 - \frac{\eta_1}{\alpha} \mathbf{B}^k_1 \nabla \mathbf{A}^k_1 - \frac{\eta_1^2}{\alpha} \nabla \mathbf{B}^k_1 \nabla \mathbf{A}^k_1
    \end{align*}

    Therefore we have 
    \begin{align*}
        \lim_{\alpha_{LoRA}\rightarrow \infty} W_{\alpha_{LoRA}}^{k+1} &= 
        \lim_{\alpha \rightarrow \infty} W_0 + \mathbf{B}^k_1\mathbf{A}^k_1 - \eta_1 \nabla \mathbf{B}^k_1 \mathbf{A}^k_1 - \frac{\eta_1}{\alpha} \mathbf{B}^k_1 \nabla \mathbf{A}^k_1 - \frac{\eta_1^2}{\alpha} \nabla \mathbf{B}^k_1 \nabla \mathbf{A}^k_1\\
        &= W^k_1 - \eta_1 \nabla \mathbf{B}^k_1 \mathbf{A}^k_1\\
        &= W^{k+1}_{FFA}
    \end{align*}
    Which completes our proof.

\end{proof}

\subsection{LLaMA Experiments and Details}
\label{app:gsm8k_samples}

Similar to RoBERTa, LLaMA is also widely used and offers competitive results for its network size. 
We evaluate both \lora and \method with the GSM-8K dataset using the same set of hyper-parameters listed by \citep{kuang2023federatedscope}. 
Our method has achieved an accuracy of $17.12\%$ on the task of GSM-8K, significantly better than the best performance of \lora at $15.68\%$ ($15.31\%$ reported in \citep{kuang2023federatedscope}).
It is also the best results on fine-tuning LLaMA with GSM-8K to the best of our knowledge.
We sample some generated answers of \lora and \method in Table~\ref{tab:gsm8k_example}.
The answers generated by \method demonstrate better performance on longer questions with more complex contexts.

\begin{table}
\centering
\begin{tabularx}{\textwidth}{c>{\raggedright}X}
\toprule
% \textbf{Par.\ \#} & \textbf{Description:} \tabularnewline
% \midrule
Question: &
A new program had 60 downloads in the first month. The number of downloads in the second month was three times as many as the downloads in the first month, but then reduced by 30\% in the third month. How many downloads did the program have total over the three months?
\tabularnewline
\midrule
\method : & 
The number of downloads in the first month was 60. The number of downloads in the second month was 3 times as many as the first month, so that is 3 * 60 = 180. The number of downloads in the third month was 180 - 30\% = 180 - 54 = 126. The total number of downloads over the three months is 60 + 180 + 126 = 366. The answer is 366.
\tabularnewline
\midrule
\lora : &
The number of downloads in the first month was 60. The number of downloads in the second month was 60 x 3 = 180. The number of downloads in the third month was 180 - 30 = 150. The answer is 150.
\tabularnewline
\midrule
\midrule
Question: &
John decides to get a loan by mortgaging his home.  His house is worth \$250,000.  He gets a loan worth 40\% of that.  He uses 60\% of that to pay off his debts.  How much money did he have leftover after paying debt?
\tabularnewline
\midrule
\method : & 
John's house is worth 250,000 dollars. He gets a loan worth 250,000 x 40\% = 100,000 dollars. He uses 100,000 x 60\% = 60,000 dollars to pay off his debts. 100,000 - 60,000 = 40,000 dollars. The answer is 40,000.
\tabularnewline
\midrule
\lora : &
John's house is worth 250,000 dollars. 40\% of that is 250,000 x 0.4 = 100,000 dollars. 100,000 dollars is the amount of the loan. 100,000 / 0.6 = 166,666 dollars. 166,666 - 100,000 = 66,666 dollars. 66,666 dollars is the amount of money he has left.
\tabularnewline
\midrule
\midrule
Question: &
Octavia wants to make a quilt for her sister. She is going to build it from small, medium, and large t-shirts that her family is done with. A small t-shirt is 3 square feet of fabric. A medium one is 4 square feet. A large one is 6 square feet. If she uses 11 small shirts, 8 medium, and 6 large shirts, how many square feet is the quilt?
\tabularnewline
\midrule
\method : & 
A small shirt is 3 square feet. A medium shirt is 4 square feet. A large shirt is 6 square feet. 11 small shirts is 11 x 3 = 33 square feet. 8 medium shirts is 8 x 4 = 32 square feet. 6 large shirts is 6 x 6 = 36 square feet. 33 + 32 + 36 = 101 square feet. The answer is 101.
\tabularnewline
\midrule
\lora : &
Octavia is going to use 11 small shirts, 8 medium, and 6 large shirts. 11 + 8 + 6 = 25. 3 square feet is 12 square feet. 25 * 12 = 300 square feet. The answer is 300 square feet.
\tabularnewline
\bottomrule
\end{tabularx}
\caption{Example of generated answer for GSM8K datasets}
\label{tab:gsm8k_example}
\end{table}

\subsection{A motivation for further reducing trainable parameters}\label{sec_QVP}

Our approach named \method in Section \ref{sec_flora} exhibits a number of theoretical benefits compared to \lora, additionally, it also performs better and is more consistent as shown in Section \ref{sec_exp}. 
We can conclude that for PEFT with adapters, by freezing randomly initialized parameters and only train on the set of parameters that were initialized at $0$ is a valid and practical approach.
This guided us to get an even more aggressive construction of adapters, which we refer to as QVP Adapters, formulated as below.

For a weight matrix $\mW\in \R^{d\times k}$, we consider the model update to be projected to a low-rank matrix such that 
$$\mW = \mW_0 + \Delta \mW = \mW_0 + \mQ_0 \mV \mP_0,$$
where $\mQ_0 \in \R^{d\times r}, \mV \in \R^{r \times r}, \mP_0 \in \R^{r \times k}$. 
Similar to \method, $\mW_0$ is the pre-trained weight, and $\mP_0, \mQ_0$ follows a random Gaussian initialization. 
We consider $\mV$ trainable and start with $\mV_0 = 0$, $\mW_0, \mQ_0, \mP_0$ are kept frozen throughout the training process. We provide the performance of QVP adapters below in Table \ref{exp_qvp}, and compare with \lora and \method. 

For the experiments where these algorithms have the same parameter budget ($r = 64$ for QVP versus $r = 4$ for \method, etc.), QVP do not perform as good as the previously mentioned algorithms. 
But a unique advantage offered by QVP adapters is that it is possible to even further reduce the number of trainable parameters, and the algorithm is still able to learn meaningful features from data. 
The same is impossible for \lora and \method since the rank $r$ can not be smaller than 1 for these methods.
Therefore, QVP is potentially useful in the case where the parameter budget is extremely constrained, such as local private training in mobile devices.

\begin{table}[]

\begin{center}
\begin{tabular}{ |c |c | c|c| c |}
   \hline
Method & \# of params & acc w/o DP&acc@$\epsilon=6$ & acc@$\epsilon=3$  \\ 
  % \hline LoRA (rank 16) & 3145728 (0.877\%) & 83.8 \% \\  
  \hline LoRA (rank 16) & 3145728 (0.877\%) & 92.49\%& 86.87\%  & 86.23\% \\  
  % \hline LoRA (rank 8) & 1572864 (0.440\%) & \%& 86.45\% & 86.05\% \\  
  \hline LoRA (rank 4) & 786432 (0.220\%) & 91.40\%& 85.2\% & 85.35\%  \\  
  % \hline LoRA (rank 2) & 393216 (0.110\%) & 92.14\%& 83.54\% & 85.57\%  \\  
  \hline \method (rank 16) & 1572864 (0.440\%) & 92.49\%& 87.33 \% & 86.36\%  \\  
  % \hline LoRA fix A (rank 8) & 786432 (0.220\%) & 92.06\%& 87.57\% & 86.98\%  \\  
  \hline \method (rank 4) & 393216 (0.110\%) & 92.20\%&  86.75\% & 86.22\%  \\  
  % \hline LoRA fix A (rank 2) & 196608 (0.055\%) & 90.60\%& 86.31\% & 85.15\%  \\  
   \hline QVP (rank 128) & 1572864 (0.412\%) & 90.46\%&84.23\% & 83.16\%   \\
   \hline QVP (rank 64) & 393216 (0.107\%) & 90.17\%&86.41\%  & 84.44\%  \\
   \hline QVP (rank 32) & 98304 (0.0272\%) & 87.31\%&85.69\%   & 84.31\%  \\
   \hline QVP (rank 16) & 24576 (0.00685\%) & 83.40\%&84.44\%   &  83.67\% \\
  \hline
\end{tabular}
\end{center}
    \caption{Comparison between \lora, \method and QVP adapters, including number of trainable parameters.}
    \label{exp_qvp}
\end{table}

\subsection{Differential Privacy Guarantee}\label{app:privacy}

We present the following corollary regarding the privacy guarantees in our experiments.
\newtheorem{corollary}{Corollary}[theorem]

\begin{corollary}[Privacy Guarantee]
    Given Theorem 1 with moments accountant in \citep{abadi2016deep},  the parallel composition and resistance to post-processing of DP, the mechanism updating FFS-LoRA with locally ran DP-SGD and FedAvg can satisfy $(\epsilon, \delta)$-DP given $\forall i, q=\frac{|B_i|}{|N_i|}$, the number of total local updates $T$ of each client and  $\sigma = O(\frac{q\sqrt{T\log(1/\delta)}}{\epsilon})$. 
    (The exact $\sigma$ is computed by the Pytorch's Opacus package \citep{yousefpour2021opacus} numerically given $q, T, \epsilon, \delta$).
\end{corollary}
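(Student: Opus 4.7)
The plan is to verify the corollary by composing three standard differential-privacy primitives, each invoked in the order given in the statement: the moments accountant for the per-client local training, parallel composition across clients (since neighboring datasets differ in exactly one record that sits on a unique client), and resistance to post-processing for the server's FedAvg step. The fact that the mechanism being trained is \method rather than vanilla \lora or full fine-tuning is irrelevant for the privacy accounting itself, because DP-SGD's guarantee depends only on the sampling rate, the noise scale, the clipping bound, and the number of gradient releases, not on which subset of model parameters the (noisy, clipped) gradients update. All randomness that touches any record lives in the local DP-SGD noise injection; the rest of the pipeline is data-independent given that randomness.

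First I would isolate one client $i$ and analyze its entire contribution to the transcript. Over the full execution, client $i$ performs $T$ local DP-SGD updates, each obtained by (i) Poisson-sampling a minibatch at rate $q = |B_i|/|N_i|$ from its private dataset $N_i$, (ii) clipping per-sample gradients to norm $C$, and (iii) adding Gaussian noise $\mathcal{N}(0, C^2 \sigma^2 I_p)$. Applying the moments accountant of Abadi et al.\ directly to this sequence yields that, provided $\sigma = O\!\left(\frac{q\sqrt{T \log(1/\delta)}}{\epsilon}\right)$, the mapping from $N_i$ to the full sequence of noisy update directions released by client $i$ to the server satisfies $(\epsilon,\delta)$-DP. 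This is verbatim the Abadi et al.\ bound; the fact that within a round client $i$'s gradient is computed at a model point that depends on the server's history is absorbed into the standard adaptive-composition interpretation of the accountant, so no new work is needed here.

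Next I would lift to the federated mechanism. Consider two neighboring global datasets $\mathcal D, \mathcal D'$ differing in a single record $x$. This $x$ lives in exactly one client's shard, say client $i^\star$. For every other client $j \neq i^\star$, the local dataset is identical under $\mathcal D$ and $\mathcal D'$, so their outputs have identical distributions conditioned on the global model at the start of each round. Thus only client $i^\star$'s released transcript carries any privacy loss with respect to the swap $x \leftrightarrow x'$, and by the previous step that loss is at most $(\epsilon,\delta)$. This is the parallel-composition step: the full transcript of all clients' messages inherits the per-client guarantee because the differing record appears in only one client's mechanism.

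Finally, the server's FedAvg aggregation and all downstream model evaluation are deterministic (or use only fresh, data-independent randomness) functions of the privatized transcript, so by the post-processing property of DP the released global model satisfies the same $(\epsilon,\delta)$-DP bound. The main obstacle to a careful write-up is not any single step but rather pinning down the adversary's view across rounds: one must explicitly justify that the dependence of round-$r$ updates on the round-$(r-1)$ aggregated model does not break the per-client accounting, and the cleanest way is to observe that this dependence is a (possibly adaptive) post-processing of privately released quantities plus data from non-$i^\star$ clients, both of which are independent of the differing record $x$. With that observation in place, the stated noise scale from Abadi et al.\ applied per client transfers, unchanged, to the full \method$+$FedAvg mechanism, which is exactly the corollary's claim.
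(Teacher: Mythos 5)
Your proposal is correct and follows essentially the same route as the paper's own proof: the moments accountant of Abadi et al.\ for each client's local DP-SGD releases, parallel composition across disjoint client shards, and resistance to post-processing for the FedAvg aggregation. The only difference is organizational --- the paper composes within a round and then again across rounds, while you apply the accountant once to all $T$ local updates per client and explicitly justify that cross-round adaptivity (round-$r$ gradients being computed at the previously aggregated model) is absorbed by the accountant, a point the paper leaves implicit.
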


\begin{proof}
    Firstly, we consider the local datasets $\{D_i\}_{i\in [n]}$for the FL network to be disjoint chunks of the global dataset. 
    The DP-SGD with FedAvg used in our paper to train LoRA or FFA-LoRA can be considered as (A) locally updating trainable parameters with DP-SGD, (B) averaging the trainable parameters from clients on the server, and (C) repeating the above two steps for some iterations.
    The privacy loss of (A) can be composed by moment accountants used in \citep{abadi2016deep}.
    The privacy loss of all clients performing local updates can be composed by the parallel composition property of DP.
    The averaging on the server in (B) is a post-processing operation that does not introduce privacy loss.
    Privacy loss of multiple FL rounds of (C) can again be composed with moment accountants used in \citep{abadi2016deep}.
    Eventually, we can convert the moment accountants to $(\epsilon, \delta)$-DP as Theorem 1 in \citep{abadi2016deep}.
\end{proof}

\subsection{Experiments with different scaling factor $\alpha$} \label{sec_alpha_exp}

We conducted experiments with a selection of different $\alpha$, we use $\alpha=8, r=8$ as baseline, and choose learning rate $\eta$ according to the learning rate scaling discussed above. 
Our results are shown in Table \ref{exp_alpha}. For \method, the performance using $\eta$ that scales with $\alpha$ is consistent across the wide range of $\alpha$.
However for \lora, the same relationship does not hold, and the performance of \lora degrades drastically when $\alpha$ changes. 
Additional grid-search shows that \lora still is able to converge with high accuracy with adequate learning rate, but finding an optimal learning rate given $\alpha$ is in general arduous. We note that the optimal $\eta$ for $\alpha=256$ is the same for both \method and \lora, consistent with our discussion in Section \ref{sec_flora}.

\begin{table}[!h]
\begin{center}
\begin{tabular}{ |c |c|c | c|c| c|c|}
   \hline
Method&$\alpha = 2$&$\alpha = 8$& $\alpha = 16$ & $\alpha = 64$ & $\alpha = 256$ \\ 
  \hline 
  \lora (best LR)   &    91.78\%  &91.36\%  &      92.11\%&   91.50\%&     91.23\% \\
  \hline 
  \lora (LR scaling)  &   71.88\%  &91.36\%  &      92.11\%&  50.96\%&    49.46\% \\ 
  \hline 
  \method (LR scaling)  &   91.31\% &91.62\% &      91.9\%&   91.17\%&   92.46\% \\  
  \hline 
  % \lora   &  87.01/87.33  &    93.55\% &84.41 &    \% \\ 
  % \hline 
  % \method  &  87.04/87.36 &    94.10\% & 85.33 &   \% \\  
  % \hline
\end{tabular}
\caption{\label{exp_alpha} Experiment with different scaling factor $\alpha$.}
\end{center}
\end{table}

\subsection{Computer Vision Experiments}\label{app:vision}

For context, we provide performance reported on huggingface as baseline. A centralized, fine-tuned model has an accuracy of 0.8539.

We first report the results in our centralized experimental setting in the table below.In this case there is no significant performance discrepancy between the two methods, implying that FFA-LoRA and vanilla has similar performance without consideration of DP and FL. This also aligns with our observations in previous experiments.

In terms of the federated case, we first report the iid setting. 
% In this setting, under similar hyper-parameters, we provide the learning curve of both FFA-LoRA and LoRA in Figure \ref{fig:vision_curve}. as well as a last k-iterated (k=50) averaged accuracy.
It can be seen that compared to LoRA, FFA-LoRA has both (a) better convergence and (b) less fluctuations in training. The findings align with our findings in language-related tasks, showing that the properties of LoRA being discussed in our paper are not limited to language tasks only.

% The non-iid setting as well as the DP setting are more time-consuming, and we are still running more experiments. Due to the time limit, we will add them to the final version of our paper if the paper is accepted.
\begin{table}[!h]
\begin{center}
\begin{tabular}{ |c|c | c|c|c|c|}
   \hline
Method &  Baseline  & Cen. LoRA & Cen. FFA-LoRA& FL iid LoRA& FL non-iid FFA-LoRA \\ 
  \hline
  Accuracy  &   85.39\% &   86.18\% &85.83\%&81.33\% &82.10\%\\
  \hline
\end{tabular}
\caption{Performance of \method in vision transformer evaluated on Food-101 dataset.}\label{tab_vision}
\end{center}
\end{table}

\subsection{Different Matrix Initialization for A}\label{app:init}

Since our proposed \method sets $\mA$ as fixed throughout the fine-tuning process, a natural question would be regarding the initialization of $\mA$.
We know that for a zero-initialized A matrix, neither LoRA nor FFA-LoRA are able to train any meaningful results. However, suppose that we have A to be full rank (which is also satisfied for any random initialization in general), there are a number of different initialization that we could utilize.

In the majority of this paper, we consider the same initialization as \lora.
Apart from Kaiming initialization, we consider orthogonal random initialization and using the top $r$ singular vectors of $\mW_0$ as matrix A.
We provide some initial results in Table \ref{tab_init}, it can be seen from the plot that matrix with orthogonal initialization seems to perform slightly better than the existing approach. However, the performance gap is not significant enough for a definitive answer.

\begin{table}[!h]
\begin{center}
\begin{tabular}{ |c|c | c|c|}
   \hline
Method &  QNLI mean  & QNLI variance \\ 
  \hline
  Kaiming Init.  &   91.84\% &   0.38\%\\
  \hline
  Orthogonal Init.   &     92.16\% &    0.83\% \\ 
  \hline
  SVD Init.   &   91.50\%&   0.59\% \\ 
  \hline
\end{tabular}
\caption{Performance of algorithm under similar conditions and different initialization on matrix $A$.}\label{tab_init}
\end{center}
\end{table}

\end{document}